%%%%%%%%%%%%%%%%%%%%%%%%%%%%%%%%%%%%%%%%%%%%%%%%%%%%%%%%%%%%%%%%%%%%%%%%%%%%%%%%
%2345678901234567890123456789012345678901234567890123456789012345678901234567890
%        1         2         3         4         5         6         7         8

\documentclass[letterpaper, 10 pt, conference]{ieeeconf}  % Comment this line out if you need a4paper

\IEEEoverridecommandlockouts                              % This command is only needed if 
                                                          % you want to use the \thanks command

\overrideIEEEmargins                                      % Needed to meet printer requirements.
%In case you encounter the following error:
%Error 1010 The PDF file may be corrupt (unable to open PDF file) OR
%Error 1000 An error occurred while parsing a contents stream. Unable to analyze the PDF file.
%This is a known problem with pdfLaTeX conversion filter. The file cannot be opened with acrobat reader
%Please use one of the alternatives below to circumvent this error by uncommenting one or the other
%\pdfobjcompresslevel=0
%\pdfminorversion=4

% See the \addtolength command later in the file to balance the column lengths
% on the last page of the document

% The following packages can be found on http:\\www.ctan.org
%\usepackage{graphics} % for pdf, bitmapped graphics files
%\usepackage{epsfig} % for postscript graphics files
%\usepackage{mathptmx} % assumes new font selection scheme installed
%\usepackage{times} % assumes new font selection scheme installed
%\usepackage{amsmath} % assumes amsmath package installed
%\usepackage{amssymb}  % assumes amsmath package installed

\let\proof\relax                % Resolve the "proof already defined"
\let\endproof\relax

% Following 3 lines are here so \cite command creates hyperlinks. This is a
% ieeeconf.cls issue
% https://tex.stackexchange.com/a/396390/33450
\makeatletter
\let\NAT@parse\undefined
\makeatother

\usepackage{times}
\usepackage{graphicx}
\usepackage{etoolbox}
\usepackage{bm}
\usepackage{xspace} 
\usepackage{standalone} 
\usepackage{pgfplots} 
\usepackage{filecontents} 
\usepackage{tikz} 
\usepackage[T1]{fontenc} 
\usepackage[font=small]{caption}
\usepackage{subcaption}
\usepackage{mathtools}
\usepackage{enumerate}
\usepackage[ruled,vlined,noend]{algorithm2e}
\usepackage{amssymb,amsmath,amsthm}
\usepackage{todonotes}
\usepackage{comment}
\usepackage{ifthen}
\usepackage{siunitx}
\usetikzlibrary{math}
\usepackage{transparent}
\usepackage{sidecap}
\newtheorem{theorem}{Theorem}

\newenvironment{sproof}{%
  \proof}{\endproof}

 %refer to figure
 %refer to section

\providecommand{\gobble}[1]{}

\providecommand{\term}{\ensuremath{\mathrm{term}}}
\providecommand{\goal}{\ensuremath{\mathrm{goal}}}
% You don't need a separate set of definitions for in the theorem

%The given symbol or text (\text{mytext}) in a circle
%To be used always in math mode

\newrobustcmd*{\mysquare}[1]{\tikz{\filldraw[draw=black,fill=#1] (0,0)
rectangle (0.2cm,0.2cm);}}

\newrobustcmd*{\mycircle}[1]{\tikz{\filldraw[draw=black,fill=#1] (0,0) circle [radius=0.1cm];}}

\newrobustcmd*{\mytriangle}[1]{\tikz{\filldraw[draw=black,fill=#1] (0,0) --
(0.2cm,0) -- (0.1cm,0.2cm);}}

\definecolor{amethyst}{rgb}{0.6, 0.4, 0.8}
\definecolor{gre}{rgb}{0.4, 1.0, 0.5} % green
\definecolor{gra}{rgb}{0.8, 0.8, 0.85} \definecolor{re}{rgb}{1.0, 0.5, 0.5}

\definecolor{setformula}{rgb}{0.0, 0.0, 0.0}
\definecolor{equivrel}{rgb}{0.0, 0.0, 0.0}

\newcommand*{\probleminternal}[4]{
	\par
	\medskip
	\noindent\fbox{\parbox{0.98\columnwidth}{
		\textbf{#4:} {#1} \\[0.05in]
		\renewcommand{\tabcolsep}{2pt}
		\begin{tabularx}{\linewidth}{rX}
			\emph{Input:} & #2 \\
			\emph{Output:} & #3
		\end{tabularx}
	}}
	\par
	\medskip
	\par
}

\let\emptyset\varnothing

\newcommand*{\ourproblem}[3]{\probleminternal{#1}{#2}{#3}{Problem}}
%\newcommand*{\decproblem}[3]{\probleminternal{#1}{#2}{#3}{Decision Problem}}
%\newenvironment{definition}{\vspace{0.0125in}\begin{definitionenv}}{\end{definitionenv}\vspace{0.0125in}}
% Ditto for lemmas and theorems, etc.
%\newenvironment{lemma}{\vspace{0.025in}\begin{lemmaenv}}{\end{lemmaenv}\vspace{0.025in}}
%\newenvironment{theorem}{\vspace{0.025in}\begin{theoremenv}}{\end{theoremenv}\vspace{0.025in}}
%\newenvironment{example}{\vspace{0.025in}\begin{exampleenv}}{\end{exampleenv}\vspace{0.025in}}
%\newenvironment{corollary}{\vspace{0.025in}\begin{corollaryenv}}{\end{corollaryenv}\vspace{0.025in}}
% End proofs with a square at the right margin.
%\renewenvironment{proof}{\par\emph{Proof:}}{\markatright{$\square$}\par}
%\newenvironment{proofsketch}{\par\emph{Proof:}}{\markatright{$\square$}\par}
\definecolor{darkblue}{rgb}{0.15,0.09,0.3}

\definecolor{formula}{rgb}{0.08,0.35,0.08}

\definecolor{varcolour}{rgb}{0.0,0.0,0.0}

\newrobustcmd*{\true}{{\sl True}\xspace}
\newrobustcmd*{\false}{{\sl False}\xspace}

\DeclareFontFamily{OT1}{pzc}{}
\DeclareFontShape{OT1}{pzc}{m}{it}{<-> s * [1.10] pzcmi7t}{}
\DeclareMathAlphabet{\mathpzc}{OT1}{pzc}{m}{it}

%\newtheoremstyle{abcd}% name
%  {}%      Space above, empty = `usual value'
%  {}%      Space below
%  {}% Body font: put \itshape for italics (I removed this, too much of the paper is italics)
%  {}%         Indent amount (empty = no indent, \parindent = para indent)
%  {\bfseries}% Thm head font
%  {.}%        Punctuation after thm head
%  {.5em}% Space after thm head: \newline = linebreak
%  {}%         Thm head spec
%\theoremstyle{abcd}

\newcounter{tecounter}
\setcounter{tecounter}{1}
\newenvironment{tightenumerate}
{
    \begin{list}{\arabic{tecounter}\addtocounter{tecounter}{1})}{%
    \setcounter{tecounter}{1}
        \setlength{\leftmargin}{08pt}
        \setlength{\topsep}{1pt}
        \setlength{\partopsep}{0pt}
        \setlength{\itemsep}{2pt}
        \setlength\labelwidth{0pt}}
        \ignorespaces}
{\unskip\end{list}}

\newenvironment{tightitemize2}
{
    \begin{list}{$\bullet$}{%
        \setlength{\leftmargin}{9pt}
        \setlength{\topsep}{-4pt}
        \setlength{\partopsep}{0pt}
        \setlength{\itemsep}{2pt}}
        \ignorespaces}
{\unskip\end{list}}

\newenvironment{tightitemize3}
{
    \begin{list}{\hspace*{-3pt}$\bullet$\hspace*{-2pt}}{%
        \setlength{\leftmargin}{8pt}
        \setlength{\topsep}{2pt}
        \setlength{\partopsep}{0pt}
        \setlength{\itemsep}{2pt}}
        \ignorespaces}
{\unskip\end{list}}

% Definitions add some vertical spacing.

%\SetAlFnt{\small}
%\SetAlCapFnt{\normalsize}
%\SetAlCapNameFnt{\normalsize}
\usepackage[noend]{algorithmic}
\algsetup{linenosize=\tiny}

\providecommand{\inv}[1]{\ensuremath{{#1}^{-1}}}

\newcommand{\defeq}{\ensuremath{\coloneqq}}

\providecommand{\reachings}[2]{\textcolor{setformula}{\ensuremath{\mathcal{S}^{#1}_{#2}}}}
% FYI: If you change these, you need to fix the line starting with "The mnemonic is"

%\providecommand{\exactreachings}[2]{\textcolor{setformula}{\ensuremath{\widehat{\mathcal{S}}}^{#1}_{#2}}}
\providecommand{\exactreachings}[2]{\textcolor{setformula}{\ensuremath{\mathbb{S}}^{#1}_{#2}}}

\newcommand\restr[2]{{% we make the whole thing an ordinary symbol
  \left.\kern-\nulldelimiterspace % automatically resize the bar with \right
  #1 % the function
  \vphantom{\big|} % pretend it's a little taller at normal size
  \right|_{#2} % this is the delimiter
  }}

\providecommand{\powerset}[1]{\mathcal{P}({#1})}
\providecommand{\coverintr}[1]{C_{#1}}
\providecommand{\dps}{{\sc JPSD}\xspace}
\providecommand{\lcw}[1]{\mathbb{L}(#1)}
\providecommand{\domain}[1]{d(#1)}
\providecommand{\proj}[2]{\pi_{#2}(#1)}
\providecommand{\lc}[1]{\mathbb{L}_{#1}}
\providecommand{\merge}[2]{#1\sqcap #2}

\providecommand{\defemp}[1]{\emph{#1}}
\providecommand{\dummy}{{\textrm{dummy}}}

\DeclareSymbolFont{stmry}{U}{stmry}{m}{n}
\DeclareMathSymbol\llbracket\mathop{stmry}{"4A}
\DeclareMathSymbol\rrbracket\mathop{stmry}{"4B}

\usepackage[numbers]{natbib}
\usepackage{multicol}
\usepackage{tabularx}
\usepackage[bookmarks=true]{hyperref}
\usepackage{xcolor}
\hypersetup{
  colorlinks,
  linkcolor={red!50!black},
  citecolor={blue!50!black},
  urlcolor={blue!80!black}
  }

\def\BibTeX{{\rm B\kern-.05em{\sc i\kern-.025em b}\kern-.08em
    T\kern-.1667em\lower.7ex\hbox{E}\kern-.125emX}}

\theoremstyle{plain}
\newtheorem{thm}{Theorem} 
\theoremstyle{definition}
\newtheorem{prop}{Theorem} 
\newtheorem{propp}{Theorem} 
\newtheorem{definition}[thm]{Definition}
\newtheorem{property}[prop]{Property}
\newtheorem{proposition}[propp]{Proposition}

\newcommand\blfootnote[1]{
	\begingroup
	\renewcommand\thefootnote{}\footnote{#1}
	\addtocounter{footnote}{-1}
	\endgroup
}
\begin{document}

%\title{Does the divulged plan hurt user privacy?
%\title{Privacy preserving robot design
%\title{Automated sensor design and planning for privacy preserving robots
%\title{Privacy preserving robots: plan, sensor, and actuator designs jointly 
%\title{Privacy preserving robots: planning jointly with sensor and actuator design
%\title{Finding plans and robot designs for privacy preservation
%\title{Privacy by design: designing robots and finding plans that protect privacy
%\title{Sensors which suffice for some robotic planning problem
%\title{Exploring all robotic sensors sufficient to solve planning problems
%\title{Finding sensor designs for robotic planning
%\title{Abstractions for exploring all robotic sensors that suffice to solve a planning problem
\title{\LARGE \bf Abstractions for computing all robotic sensors that suffice to solve a planning problem
%\thanks{Identify applicable funding agency here. If none, delete this.}
}
% TODO: Just sensor design? What about actuator?

\newcommand\unsure[1]{\textcolor{brown}{{#1}}}
\newcommand\oldtext[1]{}

\author{
Yulin Zhang and Dylan A. Shell\vspace*{-5pt}\thanks{Yulin Zhang and Dylan A.
Shell are with the Dept. of Computer Science\gobble{ and Engineering}, Texas A\&M
University, College Station, TX, USA.  %\{yulinzhang\,|\,dshell\}@tamu.edu%
}
}

\maketitle

\oldtext{
To limit the information that robots may disclose to untrusted parties, this
paper explores planning for robots that are constrained in what they may know.
The idea of this paper is to automatically produce robot designs (descriptions
of sensors and actuators) and also to find plans for a robot to achieve goals
while subject to informational stipulations.  The stipulations
restrict what can be inferred from the robot's state and behavior so that, even
if the robot is compromised by an adversary, there are definite limits on
what may be learned.
We introduce algorithms that help characterize the space of designs,
permitting compromises and tensions to be visualized, and also optimization of
trade-offs that balance design and execution costs.
Prior examples of robots that actively manage their ignorance have
used certain specific, one might even say peculiar, sensors.  A detailed
understanding of what attributes these sensors possess has been unclear; 
this paper sheds some light on the matter.
The paper presents examples, mostly of a didactic form, examined with the help
of our implementation.  }

\begin{abstract}
Whether a robot can perform some specific task depends on several aspects,
including the robot's sensors and the plans it possesses.  We are interested in
search algorithms that treat plans and sensor designs jointly, yielding
solutions---i.e., plan and sensor characterization pairs---if and only if they
exist.  Such algorithms can help roboticists explore the space of sensors to aid
in making design trade-offs.  Generalizing prior work where sensors are modeled
abstractly as sensor maps on p-graphs, the present paper increases the potential
sensors which can be sought significantly.  But doing so enlarges a problem
currently on the outer limits of being considered tractable. Toward taming this
complexity, two contributions are made: (1) we show how to represent the search
space for this more general problem and describe data structures that enable
whole sets of sensors to be summarized via a single special representative; (2)
we give a means by which other structure (either task domain knowledge, sensor
technology or fabrication constraints) can be incorporated to reduce the sets to
be enumerated.  These lead to algorithms that we have implemented and which
suffice to solve particular problem instances, albeit only of small scale.
Nevertheless, the algorithm aids in helping understand what attributes sensors
must possess and what information they must provide in order to ensure a robot
can achieve its goals despite non-determinism.  
\end{abstract}

\blfootnote{This work was supported by the NSF through awards
\href{http://nsf.gov/awardsearch/showAward?AWD_ID=1453652}{IIS-1453652} and
\href{http://nsf.gov/awardsearch/showAward?AWD_ID=1527436}{IIS-1527436}.
}

\vspace*{-2pt}
\section{Introduction}
\vspace*{-2pt}

We currently approach robot sensors from the perspective of consumers,
purchasing whatever seems necessary from a catalogue, then writing program code
to make robots useful. This perspective puts practical constraints up front: it
is influenced by technologies that are currently available, it limits options to
what can be fabricated cheaply and sold profitably. Worse, it relies
on roboticists to reason (often only heuristically) about the information
needed for a robot to achieve its goals. If there is some notion of task
structure, reasoning about it is seldom formalized, and may be tied to
assumptions often taken for granted (e.g., for fixed price, greater sensor
precision is better).  This paper approaches the question of sensors from a more
fundamental perspective---asking how we might represent and explore 
\emph{conceivable} sensors. It is, therefore, of a more theoretical nature.
\begin{figure}[ht!]
\vspace*{-8pt}
\centering
\includegraphics[scale=1.2]{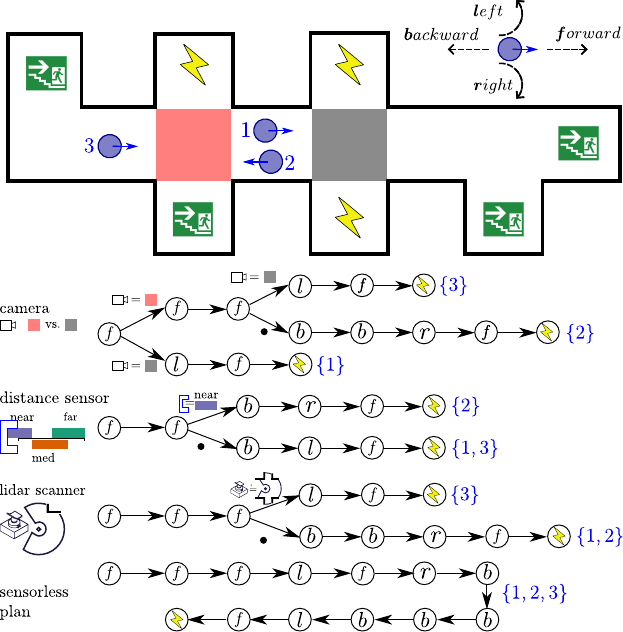}
\caption{A wheeled robot (as a blue disk) needs a
charging station (the lightning bolts), but is slightly lost (the uncertainty
in its initial pose is shown visually, as three possibilities).  Unable
to navigate stairs, it must avoid those locations lest it topple down a
stairwell.  The robot is able to recharge its battery despite the presence of
uncertainty, with the help of
either a camera, a simple linear distance sensor, or a short-range  
scanning lidar. (If bumping into walls is permitted, a sensorless plan is possible as well.)\label{fig:motivating_example}}
\vspace*{-22pt}
\end{figure}

%
% uncertainty which we model via non-determinism) 
% interested in "decision problem" form

% Privacy
% - Use text below

% Contributions of this paper:
% - general sensors
% - fabrication constraints.

% tied to technologies.
% fails to look at task structure.
% baked in assumptions: more information = better
% isn't very enlightening about what is needed

% Joint-ness = plans and sensors together, 
% The point is that our perspective emphasizes the sensors.
% Because they can be helpful, and helpful in this particular way.
Which sensors are necessary depends on what your robot wants to do.  We
study robots that act to attain goals while managing
uncertainty, formulating these precisely as planning problems, under
worst-case non-determinism.  Unlike many papers
entirely focused on finding plans, this paper examines ways in which sensors
affect whether a planning problem can be solved. The perspective is that sensor
choices alter the set of feasible plans, and we look at sensor/plan pairs
jointly.  We examine the space of sensors that are useful with
respect to a specific given problem.  These sensors, indeed especially those that
provide little information, can be enlightening.  Still, we do require they
provide information to make progress toward
goals~\cite{erdmann95understanding}, even in the presence of uncertainty. We
are interested in exploring all sensors, including even hypothetical ones, for
which there exists some goal-achieving plan.

Fig.~\ref{fig:motivating_example} shows a simple didactic
scenario illustrating multiple aspects of the  problem: a robot, uncertain about
its initial position and incapable of navigating stairs, needs to reach a
charging station.  
%We are interested in finding all sensors that could usefully ensure goal attainment. 
%We give four exemplar sensors that solve the problem under different plans:\\
We give four exemplar sensors that, under different plans, ensure goal attainment:\\
\phantom{.}~($i$)~a camera to distinguish red and gray 
helps to eliminate uncertainty in the initial pose when following the top plan;\\
\phantom{.}~($ii$)~a robot with a distance sensor can disambiguate initial position $2$
from $\{1,3\}$, since it observes that it is near the wall after two forward
moves only when it starts at $2$, while observing medium or far 
from the wall for $\{1,3\}$; \\
\phantom{.}~($iii$)~with a lidar sensor the robot can
distinguish $3$ from $\{1,2\}$ since, after three forward moves from $3$, it
senses a different polygon from those of $2$ and $3$.\\
\phantom{.}~($iv$)~the vacuous sensor also suffices, albeit only under the
assumption of benign collisions, and 
with many steps.

The sensors do not all quash the uncertainty completely, but they eliminate
enough to reach the goal under different plans. For example, the robot with a
distance sensor never resolves whether it came from $1$ or $3$ in executing the
corresponding plan. The robot with a lidar sensor does not distinguish $1$ from
$2$. But, in both cases, the robot reaches a charger.  There are also important
differences in the sensors' fidelity. The camera divides all the locations into
three equivalent classes: a red location, a gray one, and the white ones. In
contrast, the distance sensor's specification tells us that middle range
distance readings are noisy, failing to separate medium and far distances from
the wall crisply (when the robot observes `med', then it is either at a medium
or a far range from the wall; when obtaining `near', it is close to the wall). 

Sensors can be modeled as the information they provide for the plan.  While
previous
works~\cite{ghasemlou2018delineating,ghasemlou2019accel,zhang18discreteplan}
regard sensors as partitions over all events to be perceived, this paper is
more general, considering sensors as covers.  Doing so requires some care,
including new representations and means to lessen the combinatorial explosion
that a na\"\i ve treatment entails.

\gobble{
Based on the cover
representation, we proposed approaches to search for all sensors jointly with
plans in robot's belief tree. A notion of upper cover is introduced for a
compact representation, which could potentially lead to a speed up of the
algorithm. On the other hand, we may start with some fabrication constraints,
e.g., we only get to use timers to realize a VHF omnidirectional range (VOR)
sensor which measures the bearing.  The fabrication constraints describe a set
of sensors that can be realized. We are only interested in those that can both
be realized and reach the goal. To address these fabrication constraints, we
introduced several properties on the covers, and integrated them into the
search algorithm.  
}

\section{Model}

We study a setting depicted in Fig.~\ref{fig:modeloverview}. 
The {robot} is equipped with a \emph{sensor}, through which it receives
observations from the {world}.  Actions are chosen to alter states according to
the robot's \emph{plan} to, ultimately, reach some goal states in the world.
The sensor may have limited fidelity and fail to distinguish different
observations from the world.  The uncertainty in sensing is modeled via a type
of function, termed a \emph{sensor map}.  These elements are formalized in terms
of p-graphs and sensor maps that we outline below.

\vspace*{-8pt}
\begin{figure}[ht!]
\centering
\includegraphics[scale=0.8]{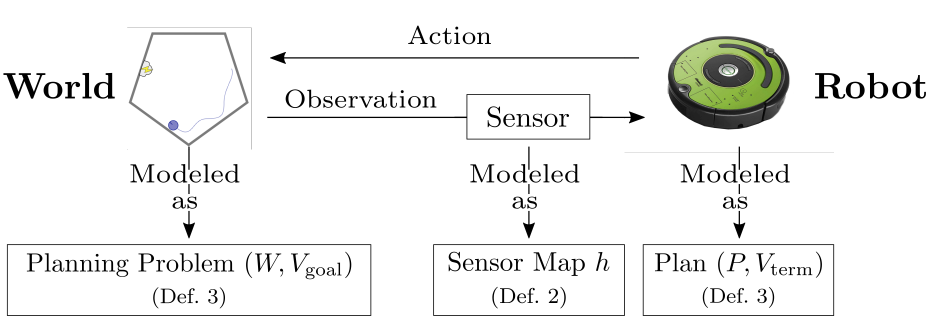}
\caption{An overview of the setting: the robot is modeled abstractly as realizing
a plan to achieve some goal in the world. The sensor is modeled as a sensor map.
Both the world and the plan have concrete representations as p-graphs.%
\label{fig:modeloverview}}
\vspace*{-12pt}
\end{figure}

\subsection{A p-graph and its interaction language}
We start by defining p-graphs and related properties.
For more complete formal definitions, please see~\cite{saberifar18pgraph}.
\begin{definition}[p-graph~\cite{saberifar18pgraph}]
A \defemp{p-graph} is an edge-labeled directed bipartite graph {\small $G=(V_y\cup
V_u, E, Y, U, V_0)$}, where 
\begin{tightenumerate}
\item the finite vertex set $V(G)$, whose elements are also called
\emph{states}, form two disjoint subsets: the
\emph{observation vertices} $V_y$ and the \emph{action vertices} $V_u$, with
$V(G)=V_y\cup V_u$;
\item each edge $e \in E$ originating at an observation vertex bears a set of
observations ${Y(e) \subseteq Y}$, containing \emph{observation labels}, and
leads to an action vertex; 
\item each edge $e \in E$ originating at an action vertex
bears a set of actions ${U(e) \subseteq U}$, containing \emph{action
labels}, and leads to an observation vertex; and
\item a non-empty set of states $V_0(G)$ are designated as \emph{initial states},
which may be either action states ($V_0(G)\subseteq V_u$) or
observation states ($V_0(G)\subseteq V_y$), exclusively.
\end{tightenumerate}
\end{definition}
We use the word `event' to mean either an action or an observation. 
Here, $Y$ is assumed to be finite.
%except in Section~\ref{sec:noiseless} (but also cf.~\cite{setlabelrss}).
A sequence of alternating actions and observations, also called an
\emph{execution}, can be traced in the
p-graph, if there exists a path, starting from some initial state, with the
same number of edges and each edge in the path bears the corresponding event.  A
p-graph $G$ describes a set of such event sequences. 
%that can be traced in the p-graph, which are called the \emph{language} of $G$,
%denoted $\Language{G}$.

One p-graph is used to model the world, another the robot.
\oldtext{
In the latter p-graph, the vertices constitute the
state that is stored, acted upon, and/or represented. 
Hence, the set of
vertices, $B\subseteq V(G)$, eventually reached by tracing each execution from
some initial vertex, establishes a equivalence relation: the two executions
that eventually reach the same set of vertices are deemed indistinguishable. 
We denote the set of executions that finally \emph{reaches exactly} $B$ with
$\exactreachings{G}{B}$. All executions in $\exactreachings{G}{B}$ are
considered indistinguishable in graph $G$. The set of executions which \emph{reaches}
vertex $v$ is denoted $\reachings{G}{v}$.
Note that, each execution in $\exactreachings{G}{B}$ should reach all vertices
in $B$ and should not reach any other vertex than those in $B$; while each
execution in $\reachings{G}{v}$ may also reach other vertices in addition to $v$.} 
Both p-graphs are coupled, resulting in a planning problem.
Sensors influence this coupling relationship by influencing the
distinguishability of observations made by the robot.  Conflations and
corruptions of events are treated next.
\oldtext{
By tracing the robot's perceived executions in both p-graphs, 
a correspondence can be established from robot states to world states, i.e., 
a relation tracking what the robot knows about the world.
Sensors influence this correspondence by influencing the distinguishability of
observations made by the robot.  Conflations and corruptions of events are
treated next.
}

\subsection{Sensor maps}
\begin{definition}[observation/sensor maps~\cite{setlabelrss}]
A \defemp{sensor map} on p-graph $G$ is a function $h: Y\to
\powerset{X}\setminus\{\emptyset\}$ mapping from an observation in $Y$ to a
non-empty set of observations $X$, where $\powerset{X}$ is the powerset of $X$.
\end{definition}

\oldtext{
One action can be generated by multiple realizations or commands from the
actuator. One command could generate multiple actions. When an actuator executes
commands from $\cap_{i=0\dots m}h(u_i)$, the outcome is treated as a
non-deterministic choice among $\{u_i\}_{i=0\dots m}$.  If a command $x$ unique
to $u_i$, i.e. $x\in h(u_i)\setminus\cup_{j\neq i} h(u_j)$, then the robot knows
that only action $u_i$ will be triggered.}

If $h$ maps $y_1$ to $\{x_1, x_2, x_3, x_4\}$ then, when event $y_1$ happens
in the world, the robot may receive any of those four values as a sensor
reading; further, we assume the choice happens non-deterministically.

%Unlike the actuator, the sensor does not get to choose from its
%readings. Hence, to make two observations indistinguishable through the sensor,
%we have to make sure that they are indistinguishable under every possible sensor
%reading, i.e.  $h(y_i)=h(y_j)$. If $h(y_i)\cap h(y_j)=\emptyset$, then the robot
%is always able to distinguish $y_i$ from $y_j$.

Given any subset of sensor readings $X'\subseteq X$ as input to (that is,
observed or perceived by) the robot, the associated observations 
within the world $W$ are related via the
preimages of $X'$ under $h$, denoted by
$\inv{h}(X')\defeq\left\{\ell \in
Y(W)\mid h(\ell)\cap X'\neq \emptyset \right\}$.  
Below, the notation for a sensor map $h$ and its preimage $\inv{h}$ is extended
in the usual manner to p-graphs by applying the function
to labels on each observation edge, i.e., in the obvious way.

\subsection{Planning problems and plans}

\begin{definition}[planning problem and plan]
A \defemp{planning problem} is a p-graph $W$ along with a goal region $V_{\goal}\subseteq
V(W)$; a \defemp{plan} is a p-graph $P$ equipped with a termination region
$V_{\term}\subseteq V(P)$. 
\end{definition}

%Full definition
\begin{definition}[solves under sensor map]
\label{def:solves}
A plan $(P,V_{\term})$ \defemp{solves a planning problem $(W,V_{\goal})$ under
sensor map $h$} if there is some integer which bounds the length of all
joint-executions of $W$ and $\inv{h}\langle P\rangle$, and for each
joint-execution and any pair of nodes $(v \in V(\inv{h}\langle P\rangle),w \in
V(W))$ reached by that execution simultaneously, the following holds:
\begin{tightenumerate}
\item if $v$ and $w$ are both action nodes and, for every label borne by each
edge originating at $v$, there exist edges originating at $w$ bearing the same
action label; \item if $v$ and $w$ are both observation nodes and, for every
label borne by each edge originating at $w$, there exist edges originating at
$v$ bearing the same observation~label; 
% had to be sneaky to stop it wrapping a line
\item if $v \in V_{\term}$ and then $w \in V_{\goal}$;
\item if $v \notin V_{\term}$ then some extended joint-execution exists,
continuing from $v$ and $w$, reaching the termination region.
\end{tightenumerate}
\end{definition}
Property 1) and 2) define a notion of safety; 3) of correctness; 4) of liveness.
Note that the sensor map, modeling robot's sensor in this paper, may affect the
solvability of the planning problem. In other words, we have to examine the
safety when searching for sensor maps.

\subsection{Sensor design in a planning problem}
Now we can define the central problem of the paper:
\ourproblem{\textbf{Joint-Plan-Sensor-Design} (\dps)}
{A planning problem $(W, V_{\goal})$}
{All the sensor maps $\mathcal{H}$, such that there
exists a plan $(P, V_{\term})$ to solve the planning problem $(W,
V_{\goal})$ under each sensor map $h\in \mathcal{H}$.}
\vspace*{-8pt}

\section{Computational abstractions for sensor maps}
% We need to make some transition.
Sensor maps map observations to their images, while the planning problem is
defined in the preimage space. To solve this problem, we will begin by
considering an alternate form in the preimage space for the sensor maps.
%covers, and then defines operations on such covers.

\subsection{Equivalent representation for sensor maps}
%The sensor map plays an important role in checking solutions for planning
%problems and constructing the compatible world states for the observer. It
%encodes the ``structure" for the observations in the preimage space $Y$.
Any sensor map has an equivalent cover representation.
%In our previous paper~\cite{zhang18discreteplan}, the information disclosure
%policy partitions the preimages into equivalence classes, so as to make events
%always indistinguishable for the observer. In this paper, the label map divides
%the preimages into a set of overlapped sets. Each set is the set of preimages of
%some reading (realization), i.e., the set of possible observations or actions
%under some reading or realization. The events in the same set are
%indistinguishable under the corresponding reading (realization).  One
%observation (action) can be inferred (generated) from multiple readings
%(realizations). Here, we will interpret this conditional conflation
%as a collection of subsets that span over all events --- a cover, instead of a
%partition.

\begin{theorem}
For planning problem $(W, V_{\goal})$, 
any sensor map~$h$ is equivalent to a cover up to plan solvability.
\end{theorem}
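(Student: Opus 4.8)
The plan is to associate to any sensor map $h\colon Y\to\powerset{X}\setminus\{\emptyset\}$ on $W$ a cover of $Y(W)$ that has exactly the same ``planning power,'' and then to argue the two are interchangeable because, by Definition~\ref{def:solves}, whether a plan $P$ solves $(W,V_{\goal})$ depends on $P$ only through the preimaged p-graph $\inv{h}\langle P\rangle$ and its termination region $V_{\term}$ (conditions 1)--4) never mention $h$, $X$, or $P$ except via $\inv{h}\langle P\rangle$, $V_{\term}$, $W$, $V_{\goal}$). The cover I would use is the \emph{associated cover} $\mathcal{C}_h\defeq\{\,\inv{h}(\{x\})\mid x\in X\,\}\setminus\{\emptyset\}$. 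First I would check it is genuinely a cover of $Y(W)$: each member is, by construction, a non-empty subset of $Y(W)$; and for any $y\in Y(W)$, since $h(y)\neq\emptyset$ we may pick $x\in h(y)$, giving $y\in\inv{h}(\{x\})\in\mathcal{C}_h$.

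The substantive step is the one-line identity $\inv{h}(X')=\bigcup_{x\in X'}\inv{h}(\{x\})$, which says the preimage operation only ever ``sees'' unions of members of $\mathcal{C}_h$ --- never $X$ or $h$ themselves. From it I would derive that $\{\,\inv{h}\langle P\rangle : P \text{ a plan over } X\,\}$ equals, as a set of p-graphs paired with termination regions, the set of all p-graphs over $Y(W)$ whose observation-edge labels are unions of members of $\mathcal{C}_h$ (with any termination region). The forward inclusion is immediate from the identity. For the reverse, given such a p-graph I recover a plan over $X$ by replacing each observation label $L=\bigcup_{c\in\mathcal{C}'}c$ (where $\mathcal{C}'\subseteq\mathcal{C}_h$) with the $X$-label $\{x\in X\mid\inv{h}(\{x\})\in\mathcal{C}'\}$; its $h$-preimage is again $\bigcup_{c\in\mathcal{C}'}c=L$, because every $c\in\mathcal{C}_h$ is $\inv{h}(\{x\})$ for some such $x$. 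Both directions leave vertices, action edges, initial states and termination region untouched, so they carry plans to plans.

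Finally I would read ``solving under the cover $\mathcal{C}_h$'' as solving under the sensor map $h_{\mathcal{C}_h}$ it induces --- $y\mapsto\{c\in\mathcal{C}_h\mid y\in c\}$, for which $\inv{h_{\mathcal{C}_h}}(\{c\})=c$ --- equivalently, composing cover-labelled plans directly with $W$. The same two-way construction shows $\{\,\inv{h_{\mathcal{C}_h}}\langle Q\rangle : Q \text{ labelled by }\mathcal{C}_h\,\}$ is again precisely the p-graphs over $Y(W)$ with union-of-$\mathcal{C}_h$ labels. Hence the two families of preimaged plans coincide, and by the dependence noted at the outset, some plan solves $(W,V_{\goal})$ under $h$ iff some plan solves it under $\mathcal{C}_h$; in fact the relabelling gives an explicit solvability-preserving correspondence at the level of individual plans.

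I expect the only friction to be bookkeeping rather than anything deep: checking that ``junk'' readings $x$ with $\inv{h}(\{x\})=\emptyset$ are harmless (they contribute nothing to any preimage label and are simply omitted from $\mathcal{C}_h$), that the p-graph definition tolerates the empty observation labels this occasionally produces (such edges are never traversed against $W$, hence are irrelevant to Definition~\ref{def:solves}), and that the $X$-alphabet of $P$ versus the $\mathcal{C}_h$-alphabet of $Q$ are matched up consistently. The conceptual content is entirely the observation that the preimage map factors through the cover $\mathcal{C}_h$.
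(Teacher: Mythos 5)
Your proposal is correct and follows essentially the same route as the paper: associate to $h$ the cover of single-reading preimages of $Y(W)$, and conversely turn any cover $\{S_1,\dots,S_k\}$ into the canonical sensor map $y\mapsto\{i\mid y\in S_i\}$. The main difference is one of rigor rather than substance---where the paper merely asserts that merging readings with identical preimages (and hence passing to the cover) preserves solvability, you make this explicit via the identity $\inv{h}(X')=\bigcup_{x\in X'}\inv{h}(\{x\})$ and the resulting bijection-up-to-relabelling between preimaged plans over $X$ and cover-labelled plans, which is a welcome tightening of the same argument.
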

\begin{proof}
$\Rightarrow:$ Given any sensor map $h$, 
to see whether a plan is a solution (cf. Def.~\ref{def:solves}), 
we must determine the preimage
$\inv{h}(x)=\{\ell\in Y(W)\mid h(\ell)=x\}$ for single readings~$x$. 
Collect all the data associated with $h$, on the $X$, via

\vspace*{-10pt}
\[M=\{\inv{h}(x_1), \inv{h}(x_2), \dots, \inv{h}(x_n)\},\]
\vspace*{-18pt}

\noindent where $X=\{x_1, x_2, \dots, x_n\}$.  
This is a multiset. But now observe that
where for any $x_i$ and $x_j$ we have 
$\inv{h}(x_i)=\inv{h}(x_j)$, we can
construct a new sensor map by replacing $x_i$ and $x_j$ with a new symbol $x'$.
This new sensor map is also a solution if and only if $h$ is a solution for 
\dps. Under this new sensor map, no two readings in the sensor map
share the same preimage, and $\inv{h}$ can be thus represented as 
set

\vspace*{-10pt}
\[C=\{\inv{h}(x_1), \inv{h}(x_2), \dots, \inv{h}(x_n)\},\]
\vspace*{-18pt}

\noindent where $\cup_{x_i\in X}\inv{h}(x_i)=Y(W)$. The set above is called a \defemp{cover} for set $Y(W)$. 
Henceforth, we call the cover for sensor map $h$ an \defemp{observation cover}, 
denote it $\coverintr{h}$. (It is a subset of the powerset of $Y(W)$,
i.e., $\coverintr{h}\subseteq \powerset{Y(W)}\setminus
\{\emptyset\}$.)
% You have the word powerset above, so the definition is wrapped in already.
% and $\powerset{Y}$ denotes the powerset of $Y$. 

$\Leftarrow:$ Having just showed that there exists a cover interpretation for
any sensor map $h$, we now construct a sensor
map for any observation cover. 
Suppose cover $\{S_1, S_2, \dots, S_k\} \subseteq \powerset{Y(W)}$ for set $Y(W)$
is given. Taking the first $k$ natural numbers for $X$,
consider a label map $h$ defined so that $y\overset{h}{\mapsto}\big\{i\in
\{1,2,\dots, k\} \mid y\in S_i\big\}$.

\medskip

Together, the cover $\coverintr{h}$ is an equivalent representation for any
sensor map $h$, up to plan solvability.
\end{proof} 

\subsection{Operations on observation covers}
Next, we give two operations on covers (projection and intersection) that are
useful for sensor maps.

The sensor map is a cover for all observations in the planning problem. 
Only some small number of observations may be applicable while
at particular world states.
We are interested in how the observations in such a reduced set conflate with
each other. This is realized via an operation that reduces the domain:
\begin{definition}[cover projection]
For cover $C=\{G_1, G_2, \dots, G_n\}$, denote its domain by $\domain{C}=\cup_{1\leq i\leq n}
G_i$. Then the \defemp{projection of $C$} on any domain $D$ is 
$\proj{C}{D}=\{G_i\cap D| G_i\in C\}$.  
\end{definition}  
% Yulin:I removed 'smaller' because D is not necessarily smaller. It can has
% some overlap with the \domain{C}. But it is true that after
% projection, the domain of the projected cover is smaller.
We call sensor map {$\proj{C}{D}$} with reduced domain \mbox{$\domain{C}\cap
D$} a \defemp{partial sensor map}. The word `partial' is apt as the sensor map 
need not cover every observation in the planning problem.

\smallskip

On the other hand, we are also interested in finding all sensor
maps with certain behavior on their restrictions. 
Specifically, we desire to find all label maps which, when given two partial label maps, agree with those label maps on their projections.
%\textcolor{red}{On the other hand, we are also interested in finding all sensor
%maps, each of which can generate two given partial label maps when projected to
%their domains.} 
%On the other hand, we are also interested in finding all sensor maps that can
%generate two partial ones when projected to their domains. 
This comes from an intersection between two partial sensor maps.
\begin{definition}[cover intersection]
For any two partial sensor maps, expressed as cover $C_1$ and $C_2$, with the
union of their domains $D=\domain{C_1}\cup\domain{C_2}$, then 
let $\mathbb{D}$ be all covers\footnote{Throughout, variables in blackboard
bold represent a list of covers.} whose domain is $D$.
Then the \defemp{intersection} of $C_1$ and $C_2$, denoted $\merge{C_1}{C_2}$, is
defined so that $\forall C'\in \mathbb{D}$, we have $C'\in \merge{C_1}{C_2}$, if and only if
\begin{itemize}
\item [{\footnotesize\emph{(a)}}]$\domain{C'}=\domain{C_1}\cup\domain{C_2}$, and
\item [{\footnotesize\emph{(b)}}]$\proj{C'}{\domain{C_1}}\subseteq C_1$ and $\proj{C'}{\domain{C_2}}\subseteq C_2$.
\end{itemize}
\end{definition}
Note that $\merge{}{}$ is associative and that
$\merge{C_1}{\emptyset}=\emptyset$ for any cover $C_1$. When no cover
that satisfies \emph{(a)} and \emph{(b)} above, then $\merge{C_1}{C_2}=\emptyset$. We
say that $C_1$ is \defemp{compatible} with $C_2$ if $\merge{C_1}{C_2}\neq
\emptyset$. We will also lift this notation to the intersection of lists
of covers. 
In writing $\merge{\mathbb{L}_1}{\mathbb{L}_2}$ 
for two lists of covers $\mathbb{L}_1$ and $\mathbb{L}_2$,
we mean $\merge{\mathbb{L}_1}{\mathbb{L}_2}=\cup_{C_1\in \mathbb{L}_1,
C_2\in\mathbb{L}_2} \merge{C_1}{C_2}$.

\section{Jointly searching for sensor designs \& plans}
First, we construct a robot's belief tree and then give
approaches to search for all sensor designs and plans in it.

\subsection{The belief tree under different sensor maps and actions}

The robot's plan must manage uncertainties owing to  initial ignorance, action
non-determinism, and sensor imperfection.  The robot's belief expresses this
uncertainty, which we represent as a set of states. Without this, the robot may
violate plan safety by trying to execute some action that is not possible in
its actual state.  The dynamics of the belief will be captured by a finite tree
structure, where each
vertex lists a set of world states, the robots' belief.
Plans need only visit each belief
vertex at most once.

\begin{theorem}
\label{thm:plan_kernel}
Let $\mathcal{W}$ be the set of estimated world states for the robot's belief.
For any sensor design $h\in \mathcal{H}$, where $\mathcal{H}$ is a set of
sensor maps for \dps, if there exists a plan that solves the planning problem
under $h$, then there exists another plan, also a solution, that visits $\mathcal{W}$ at most once
under $h$. 
\end{theorem}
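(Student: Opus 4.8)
The plan is to argue by a pruning construction on whatever solution plan $(P, V_{\term})$ is given. The key observation is that a plan is essentially a finite labeled tree (or can be unrolled into one up to the length bound guaranteed by Def.~\ref{def:solves}), and that the only thing that matters for safety, correctness, and liveness of a joint-execution with $W$ under $h$ is the sequence of beliefs visited, together with which actions are issued and which observations are admitted at each belief. So first I would make precise the map from plan vertices to belief sets: for each vertex $v \in V(\inv{h}\langle P\rangle)$ reached by a joint-execution, let $\mathcal{W}(v) \subseteq V(W)$ be the set of world states reachable simultaneously with $v$. Because the plan length is bounded, the belief-labeled execution tree is finite; I would work with that finite unrolling rather than with $P$ directly, so that ``visits $\mathcal{W}$ at most once'' is unambiguous.

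Next I would describe the surgery. Suppose some belief set $B \in \mathcal{W}$ is visited at two distinct vertices $v_1$ and $v_2$ on the execution tree, say $v_1$ a proper ancestor-path predecessor of $v_2$ (or on a sibling branch). Since $\mathcal{W}(v_1) = \mathcal{W}(v_2) = B$, the subtree hanging off $v_1$ is a valid continuation from any world state in $B$, hence in particular a valid continuation from $v_2$: splice the subtree rooted at $v_1$ in place of the subtree rooted at $v_2$. I would verify that each of the four conditions of Def.~\ref{def:solves} is preserved by this splice: safety (conditions 1 and 2) is a local condition at each vertex paired with each compatible world state, and since the spliced subtree was safe against all of $B$ it remains safe; correctness (condition 3) holds because terminating vertices in the spliced copy carry the same beliefs, all $\subseteq V_{\goal}$; liveness (condition 4) holds because the spliced subtree already witnessed reachability of $V_{\term}$ from every state of $B$. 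Repeating this operation strictly decreases the number of repeated belief-visits (or, more carefully, we should iterate until a fixed point), and since each splice replaces a subtree by one that occurs earlier/elsewhere, the process terminates and yields a plan whose execution tree visits each element of $\mathcal{W}$ at most once. Finally I would note the length bound is preserved (in fact can only shrink), so the resulting object is still a plan in the sense required, and re-abstract the pruned tree back into a p-graph $(P', V_{\term}')$.

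The main obstacle I anticipate is making the termination argument airtight. A single naive splice can reintroduce a different repeated belief inside the copied subtree, or create a repeat between the copied subtree and material elsewhere in the tree, so a simple ``count of repeats strictly decreases'' claim is too glib. The clean fix is to choose, among all vertices carrying a given belief $B$, a canonical representative (e.g., the shallowest one, breaking ties by a fixed order on vertices) and redirect every other occurrence of $B$ to point at the canonical subtree; doing this simultaneously for all beliefs, then taking the reachable part, yields a finite structure (now a DAG of beliefs, which we can re-unroll or directly treat as the p-graph $P'$) in which each belief appears once by construction. I would present the argument in that ``collapse to canonical representatives'' form rather than as an iterated pairwise splice, since it sidesteps the termination bookkeeping entirely; the only thing left to check is then purely the four-part invariant of Def.~\ref{def:solves} under the collapse, which is the routine part.
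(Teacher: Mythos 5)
There is a genuine gap, and it is the direction of your collapse. The case the theorem actually cares about is a belief $B$ recurring along a single root-to-leaf path (that is exactly the situation the belief-tree construction forbids), so the two occurrences $v_1, v_2$ of $B$ are ancestor and descendant. Your first formulation --- splice the subtree rooted at the ancestor $v_1$ in place of the subtree rooted at $v_2$ --- does not remove the repetition at all: the copied subtree contains another occurrence of $B$ at the same relative depth, so iterating grows the tree and never converges, as you half-suspected. But your proposed fix, collapsing every occurrence of $B$ onto the \emph{shallowest} representative, inherits the same defect in a worse form: the canonical vertex is an ancestor of the occurrences being redirected, so the redirection is a back edge and the resulting structure is not a DAG but a cyclic graph. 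A cycle is fatal here, not merely inconvenient: since the belief at both ends of the loop is the same set of world states, the world can feed the plan the same events again and drive the joint-execution around the loop arbitrarily many times, so no integer bounds the length of all joint-executions and the object you produce fails the very first requirement of Def.~\ref{def:solves}; re-unrolling it into a finite tree is likewise impossible. Your verification of conditions 1)--4) is fine as far as it goes, but it is checked against a structure that is not a plan in the required sense.

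The repair is to collapse in the opposite direction, which is precisely the paper's one-line argument: at the earlier visit to $\mathcal{W}$, shortcut directly to (i.e., substitute the subtree rooted at) the \emph{last} visit to $\mathcal{W}$ along that path, under the same sensor map. This removes the segment between the two visits, so the remaining execution length strictly decreases, no back edges are created, the global length bound is preserved (it can only shrink), and termination of the iterated surgery is immediate. The safety/correctness/liveness checks you already sketched then go through verbatim, because the deeper subtree was a valid goal-achieving continuation from the belief $B$ and is being invoked from a vertex carrying exactly that belief.
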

\begin{sproof}
Construct a plan by shortcutting directly to the last visit 
to $\mathcal{W}$ under the same sensor map.
\end{sproof}
\begin{comment}
This theorem can be proved by constructing a new plan, which always takes the
action chosen at the last visit at $\mathcal{W}$ under the same sensor map.
Then the new plan is a shortcut of the original one. Inherited from the
original plan, the new one will always terminate at the goal region. 
\end{comment}
%\begin{proof}
%Let $h_y$ be one sensor design for \dps, the corresponding actuator design and
%plan be $h_u$ and $P$. Suppose $P$ visited $\mathcal{W}$ $n$ times. Let the
%action taken at $i$-th visit be $a_i$. Then we can construct a new plan $(P',
%V_{\term})$ which always take $a_i$ at $\mathcal{W}$. As a shortcut of plan $(P,
%V_{\term})$, plan $(P', V_{\term})$ always satisfies the stipulations, since it
%never creates additional I-states for the observer. In addition, $P'$ will also
%terminate at the goal region if $P$ does.
%\end{proof}

Let $A(w)$ be the set of outgoing events for vertex \mbox{$w\in V(W)$}. Then the belief
tree, a sketch of which appears in Fig.~\ref{fig:belieftree}, can be constructed as follows:
\begin{itemize}
\item [$\triangleright$]\emph{Initialization}: An initial vertex $\mathcal{W}^0$ of the same
vertex type is created for the set of initial world states $V_0(W)$. 
%(By minor abuse, we will refer to $\mathcal{W^0}$ as either a belief vertex or the set of world states $V_0(W)$.)

\item [$\triangleright$]\emph{Expanding action vertex $\mathcal{W}$}: 
Collect the common actions as $U({\mathcal{W}})=\cap_{w\in \mathcal{W}} A(w)$, i.e., the set of actions each of which is available at every state in $\mathcal{W}$.
Now, for any action $a\in U({\mathcal{W}})$,
consider the
transition $\mathcal{W}\xrightarrow{\{a\}}{} \mathcal{W'}$. 
If the set of world states $\mathcal{W}'$ has not appeared earlier in
the path from $\mathcal{W}^0$ to $\mathcal{W}$, add new belief vertex
$\mathcal{W'}$ connected via an edge bearing $\{a\}$. Otherwise, 
add transition from $\mathcal{W}$ to a vertex $\mathcal{W}_{\dummy}$
to avoid expanding the same belief vertex multiple times.

\item [$\triangleright$]\emph{Expanding observation vertex $\mathcal{W}$}: 
Let all possible observations at the states in $\mathcal{W}$ be
$Y({\mathcal{W}})$, i.e., $Y({\mathcal{W}})=\cup_{w\in \mathcal{W}} A(w)$.  As
before,  construct a transition from $\mathcal{W}$ to $\mathcal{W'}$  if
$\mathcal{W'}$ is new, or to $\mathcal{W}_{\dummy}$ otherwise.
But now do this, not just the singletons, but 
for every $G\subseteq Y({\mathcal{W}})$.

\item [$\triangleright$]\emph{Goals in the tree}: Mark $\mathcal{W}$ a goal state, if
$\mathcal{W}\subseteq V_{\goal}$.
\end{itemize}

%Starting with initial vertex denoting belief $\mathcal{W}$, we can construct the
%robot's belief tree. If $\mathcal{W}$ is a set of action states, we will obtain
%the set of actions $U_{\mathcal{W}}$ that are safe for every state in
%$\mathcal{W}$, i.e., $U_{\mathcal{W}}=\cap_{w\in \mathcal{W}} A(w)$, where
%$A(w)$ is the set of actions available at state $w$. Now we will consider the
%transition $\mathcal{W}\xrightarrow{\{a\}}{} \mathcal{W'}$ for any action $a\in
%U_{\mathcal{W}}$. If the set of world states $\mathcal{W}'$ is not visited in
%the ancestors, then we will construct an outgoing edge bearing $\{a\}$ to a new
%belief vertex denoting $\mathcal{W'}$. Otherwise, we will discard this
%transition. If $\mathcal{W}$ is a set of observation states, we will obtain the
%set of all observations $Y_{\mathcal{W}}$ available at each state in
%$\mathcal{W}$, i.e., $Y_{\mathcal{W}}=\cup_{w\in \mathcal{W}} O(w)$, where
%$O(w)$ is set of observations available at state $w$.  Similarly, we will
%construct a new vertex $W'$ and an outgoing edge
%$\mathcal{W}\xrightarrow{G}{}\mathcal{W'}$ for every $G\subset Y_{\mathcal{W}}$,
%if $\mathcal{W'}$ is not visited in its ancestors. We will repeat this process
%to expand each belief vertex $\mathcal{W}$ until $\mathcal{W}\subseteq
%V_{\goal}$, where we will mark the belief vertex $\mathcal{W}$ as a goal state.
%An example of the belief tree is shown in Fig.~\ref{fig:belieftree}. There are
%only a finite number of belief vertices in the tree, since the maximum depth of
%the belief tree is $2^{|V(W)|}$. 

\oldtext{The belief tree is of finite size, since the maximum depth of the belief tree is $2^{|V(W)|}$. }
The belief tree is finite.
Any sensor map and goal-achieving plan are a subtree that satisfies
the following:
\begin{itemize}
\item[($i$)] \emph{Goals are achieved:} the leaf vertices in the subtree are all in the goal region;
\item[($ii$)] \emph{Readiness to receive all observations:} the outgoing labels at a particular observation vertex in the subtree cover all
outgoing events in the original belief tree; 
\item[($iii$)] \emph{Discernment is consistent:}
the subset of observations in the tree is universal, i.e., if $\{o_1,
o_2\}$ appears on any edge of the subtree, then it will appear at every belief
vertex whose outgoing events contains both $o_1$ and $o_2$. 
\end{itemize}

\begin{figure}[b]
\vspace*{-8pt}
\centering
	\includegraphics[scale=0.34]{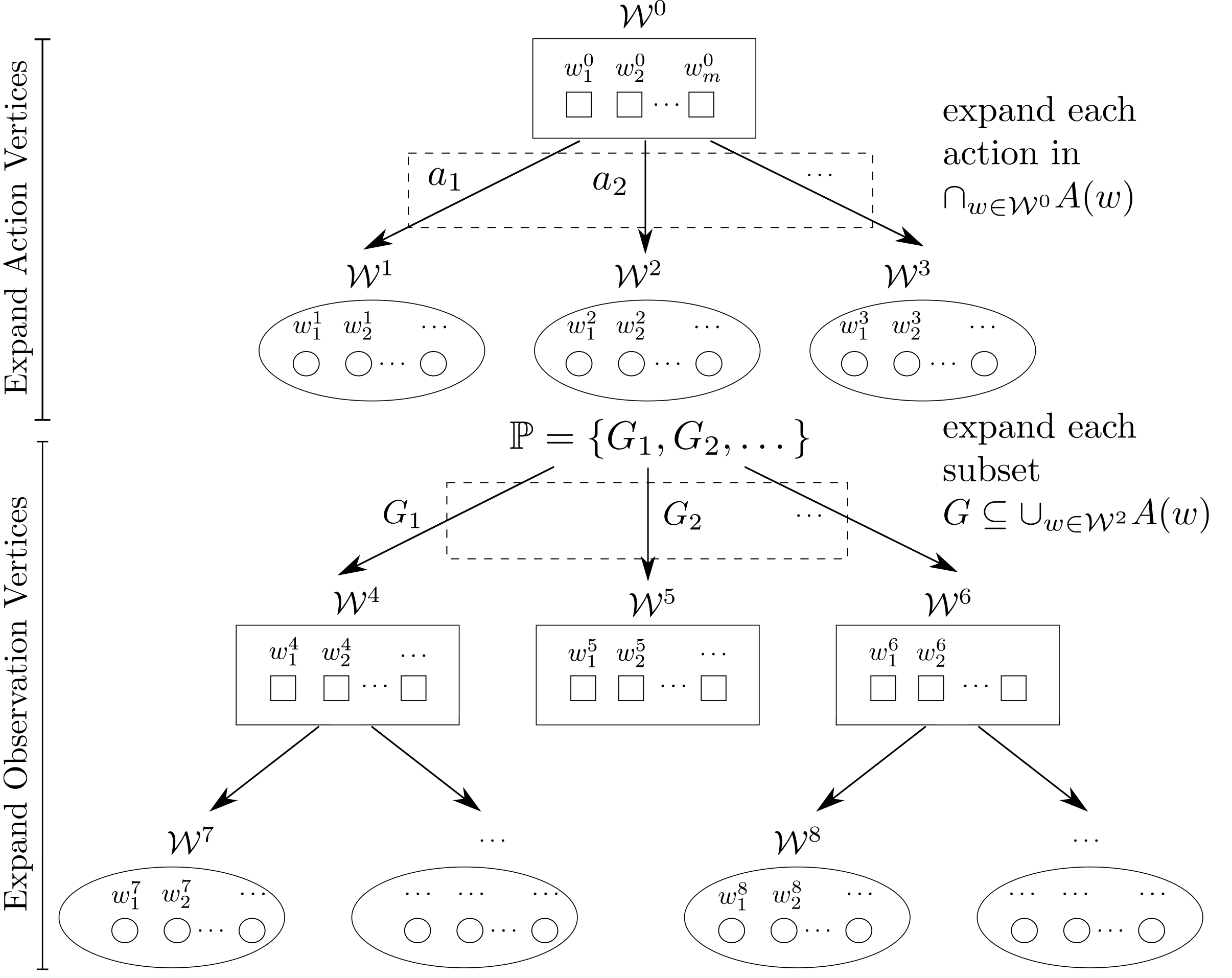}
        \caption{The robot's belief tree. Action and observation
vertices, visualized as boxes and circles respectively, have different expansions.}
        \label{fig:belieftree}
\end{figure}

\subsection{Searching for sensor designs and plans jointly}
Next, we search this structure for sensor designs and
plans jointly, returning all appropriate sensor maps. 
While the tree is constructed from the root down, this search bubbles from
the leaves back upwards.

For each belief vertex $\mathcal{W}$, we will maintain a list of covers,
denoted by $\lcw{\mathcal{W}}$, to record all the appropriate observation
covers in the subtree. When $\mathcal{W}$ is in the goal region, there are no
constraints on sensor maps from its subtree. 
Hence, we create a new symbol $\epsilon\not\in Y$, and initialize its cover
list to $\lcw{\mathcal{W}}=\llbracket\{\epsilon\}\!\rrbracket$. This will make it compatible with
any cover when integrating with the goal-achieving sensor covers in a bottom-up
manner.
%We initialize $\lcw{\mathcal{W}}$
%as the powerset of $G$, where $G$ is the set of observations on the
%\textcolor{orange}{latest} edge leading to $\mathcal{W}$ from $\mathcal{W}^0$.
%(If no $G$ can be found to
%initialize $\lcw{\mathcal{W}}$, then $\mathcal{W}^0$ transitions to goal state
%$\mathcal{W}$ with some action, and the planning problem can be solved without
%sensors.) 
For any non-goal belief vertex $\mathcal{W}^p$ (`p' stands for
parent), we will construct its cover list from its
children. Let the outgoing events be $\{G_1, G_2, \dots, G_m\}$ and the
corresponding child vertices be $\{\mathcal{W}^c_1,
\mathcal{W}^c_2, \dots, \mathcal{W}^c_m\}$ (`c' for child). Then we have:
\begin{tightitemize3}
\item If $\mathcal{W}^p$ is an action vertex, then each cover in any of its children's lists
$\lcw{\mathcal{W}^c_i}$ is a valid one for $\mathcal{W}^p$ (under a
particular action choice), i.e., $\lcw{\mathcal{W}^p}=\cup_{1\leq i\leq m}
\lcw{\mathcal{W}^c_i}$.

\item If $\mathcal{W}^p$ is an observation vertex, 
we must consider the combinations from $\{G_1, G_2, \dots, G_m\}$ that
nevertheless cover $Y(\mathcal{W}^p)$.
Let $K$ denote one such combination, 
then $K=\{G_{k_1}, G_{k_2}, \dots, G_{k_\ell}\}$, where
$k_j\in \{1,2,\dots, m\}$ and $\cup_{1\leq j\leq \ell} G_{k_j}=Y(\mathcal{W}^p)$.
Each edge labeled with $G_{k_j}$
gives a child vertex $\mathcal{W}^c_{k_j}$, where that child has a cover list
$\lcw{\mathcal{W}^c_{k_j}}$ modeling the sensors that can reach the goal from
$\mathcal{W}^c_{k_j}$. For a given combination $K$, representing a set of sensor
readings, we want to find all sensor maps, denoted as $\lc{K}$, that can
generate $K$ when projected to $\domain{K}$, and is goal-achieving for
the subtrees starting from each child vertex $\mathcal{W}^c_{k_j}$. This is
realized via intersection operations: 
%We want to find all covers, comprising cover list $\lc{K}$ under
%combination $K$, which produces not only $K$ when projected to $\mathcal{W}^p$,
%but also at least a cover from each $\lcw{\mathcal{W}^c_{k_j}}$ when projected to
%their corresponding domain.  
%$\lc{K}$ is computed as follows:

\vspace*{-17pt}
$$\lc{K}=\cup_{C_1\in\lcw{\mathcal{W}^c_{k_1}}, \dots, C_n\in
\lcw{\mathcal{W}^c_{k_n}}}\merge{K}{\merge{C_1}{\dots\merge{C_{m-1}}{C_m}}}.$$
\vspace*{-17pt}

The $\merge{}{}$ operation guarantees the universality of the subsets in the resulting cover.
Let $\mathbb{C}$ be the set of all such combinations, such that their labels cover
$Y(\mathcal{W}^p)$. Each combination $K\in\mathbb{C}$ gives a list of covers for
the parent vertex. So we update $\lcw{\mathcal{W}^p}$ to value
$\cup_{K\in \mathbb{C}} \lc{K}$.
\end{tightitemize3}

By propagating the list of covers from the goal vertices back upward until the
initial belief vertex, we are able to obtain all the covers from
$\lcw{\mathcal{W}^0}$ where there exists some plan for each cover in
$\lcw{\mathcal{W}^0}$ toward the goal.

\subsection{Compact representation with upper covers}

In the data structure above, we need to maintain a list of covers
$\lcw{\mathcal{W}}$ for each belief vertex $\mathcal{W}$. The list
can grow very large. 
Luckily, we only need to maintain the largest covers among the ones with the
same domain. 
Every subset of such covers is also a valid solution, so long as it is a proper cover.

\begin{theorem}
If $C$ is an observation cover in the solution of \dps, then for any
$C'\subseteq C$, such that $\domain{C'}=\domain{C}$, there exists a plan
achieving the goal. 
\label{thm:subset_close}
\end{theorem}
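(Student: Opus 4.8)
The plan is to start from a plan $P$ that solves $(W, V_{\goal})$ under the sensor map represented by $C$, place it inside the belief tree, and then simply delete the branches that correspond to the cover-sets thrown away in passing from $C$ to $C'$. The observation that makes this go through is that $C' \subseteq C$ \emph{together with} $\domain{C'} = \domain{C} = Y(W)$ keeps $C'$ a cover of $Y(W)$: at every belief vertex $\mathcal{W}$ the projected family $\proj{C'}{Y(\mathcal{W})}$ still covers $Y(\mathcal{W})$, so deleting branches never destroys the ``readiness to receive all observations'' property.

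Concretely, I would first invoke the characterisation of solutions developed above: since $C$ is in the solution of \dps, there is a subtree $T$ of the belief tree satisfying conditions $(i)$--$(iii)$ with associated observation cover $C$, and we may take $T$ in the normal form returned by the bottom-up cover computation, in which every observation belief $\mathcal{W}$ of $T$ carries outgoing edges labelled by every member of $\proj{C}{Y(\mathcal{W})}$, and every action belief carries a single chosen action. I would then construct $T'$ from the root down: at each observation belief $\mathcal{W}$ retain exactly the outgoing edges whose labels lie in $\proj{C'}{Y(\mathcal{W})}$ --- these are edges of $T$ because $\proj{C'}{Y(\mathcal{W})} \subseteq \proj{C}{Y(\mathcal{W})}$ --- at each action belief retain the chosen action, and recursively drop the subtrees hanging off the deleted edges. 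It then remains to verify the three subtree conditions for $T'$. For $(ii)$: the retained labels at $\mathcal{W}$ are exactly $\proj{C'}{Y(\mathcal{W})}$, whose union is $Y(\mathcal{W})$ because $\domain{C'} = Y(W) \supseteq Y(\mathcal{W})$. For $(iii)$: every retained label is a projection of a member of the single cover $C'$, so if two observations co-occur on some retained edge they co-occur on a retained edge at every belief whose outgoing events contain both. For $(i)$: action beliefs are untouched and every observation belief still keeps a nonempty family of outgoing edges, so the recursion always bottoms out at former leaves of $T$, each of which is a goal belief; hence every leaf of $T'$ lies in $V_{\goal}$. Therefore $T'$ is a solution whose observation cover is $C'$, which is exactly the claim.

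The step I expect to be the crux is the check of $(ii)$ after pruning: one must be certain that discarding the cover-sets in $C \setminus C'$ cannot leave some legal observation uncovered at a reachable belief, since that would let the world present an observation for which the robot has no branch --- a safety violation. This is precisely where the hypothesis $\domain{C'} = \domain{C}$ is indispensable; without it $\proj{C'}{Y(\mathcal{W})}$ could fail to cover $Y(\mathcal{W})$ and the argument collapses. A secondary point worth confirming is that the recursive pruning cannot turn an interior belief of $T$ into a dead end, but this is immediate once $(ii)$ is established, because at an observation belief the retained family is a nonempty cover and at an action belief nothing is removed.
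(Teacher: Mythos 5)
Your proposal is correct and follows essentially the same route as the paper, which (modulo an apparent typo writing $C'\setminus C$ for $C\setminus C'$) argues by pruning from the solution subtree the edges bearing sets discarded in passing from $C$ to $C'$ and checking that all leaves remain goals and the retained labels still form a valid cover. Your elaboration of conditions $(i)$--$(iii)$, in particular that $\domain{C'}=\domain{C}$ preserves coverage at every reachable observation belief, is exactly the verification the paper's one-line proof sketch leaves implicit.
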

This theorem can be proved by showing that the subtree without edges bearing
subsets of events in $C'\setminus C$, still has all leaf vertices as goals and
sensor map as a valid cover.

\begin{definition}[upper cover]
\label{def:upper}
Let $\mathbb{C}$ be a list of observation covers, $C$ is an \defemp{upper cover} in
$\mathbb{C}$ if there does not exist any cover $C'\in \mathbb{C}$, such that
$C'\supsetneq C$ and $\domain{C'}=\domain{C}$.
\end{definition}

According to Theorem~\ref{thm:subset_close}, we only need to maintain a set of
upper covers in each $\lcw{\mathcal{W}}$.

\subsection{Empirical explorations of sensor maps} 
%\textcolor{orange}{In this problem, I did not model the heading information intot the planning problem for simplicity. The robot is not allowed to turn and go back.}
%\textcolor{blue}{Then what does `constructing a sensor map' mean? How does it magically give things that make sense in English? Why does the finest sensor not distinguish heading, to give GPS? This doesn't make sense to me. Write what you did, using as many words as needed to make the statement crystal clear, I can compress it down.}
We implemented the algorithms in Python to search for all sensor map solutions
for the problem displayed in Fig.~\ref{fig:scenario_1},
a modified version of the 
(Fig.~\ref{fig:motivating_example}) motivating example: a robot,
initially located at $1$ or $2$, moves to the charging station.  The
robot can only move forward one or two steps, turn left or right at the
location $5$ or the corner $6$. The robot must avoid bumping into the walls of
the four offices $A$--$D$ and also the stairs. It must, thus, obtain
information from its sensors to reduce its uncertainty.  To realize this
scenario, we construct a world p-graph with \num{22} states and
\num{11} observations. 

The algorithm outputs an upper cover with \num{767} entries. By enumerating all
subsets of the upper cover that covers all the observations
(Theorem~\ref{thm:subset_close}), an enormous number of sensor maps are
produced.  Among these, several are directly recognizable sensors. For example,
they include a sensor map distinguishing every pair of positions, describing a
{\sc gps} device. The sensor partitioning the situations into those before and after
bumping into walls, could be realized as a contact sensor. 

Naturally, some of the sensor maps are inscrutable and there are others for
which not known hardware implementation could be discerned.  For instance, the
sensor isolating cell $5$ when facing west from cell $7$ when facing north (e.g., a
distance sensor won't work).  This motivates the next section.

%Two additional sensor maps are
%shown as follows: 
%\begin{itemize}
%\item A sensor map which distinguishes all positions, is able to be realized
%as a GPS sensor. 
%%The GPS sensor only gives information about robot's coordinate but not its
%%heading. 
%
%%\item A distance sensor helps to resolve the initial uncertainty since they have
%%different distances toward the wall. The angular information can be inferred
%%from the distance sensor, since it gives different distance readings when the
%%robot is heading north and west.
%\item A bumper sensor can also resolve the uncertainty by driving the robot into
%a situation where the robot will bump into a wall at one position while it does
%not at another position. 
%\end{itemize}
%However, in this large set, there exist sensor maps that cannot be realized by
%some known hardware. For example, the sensor distinguishing cell $5$
%when facing west from cell $7$ when facing north cannot be realized as a
%distance sensor, since they are the same distance to the wall. This
%motivates the next section.
\begin{figure}[t]
\centering
	\includegraphics[scale=0.5]{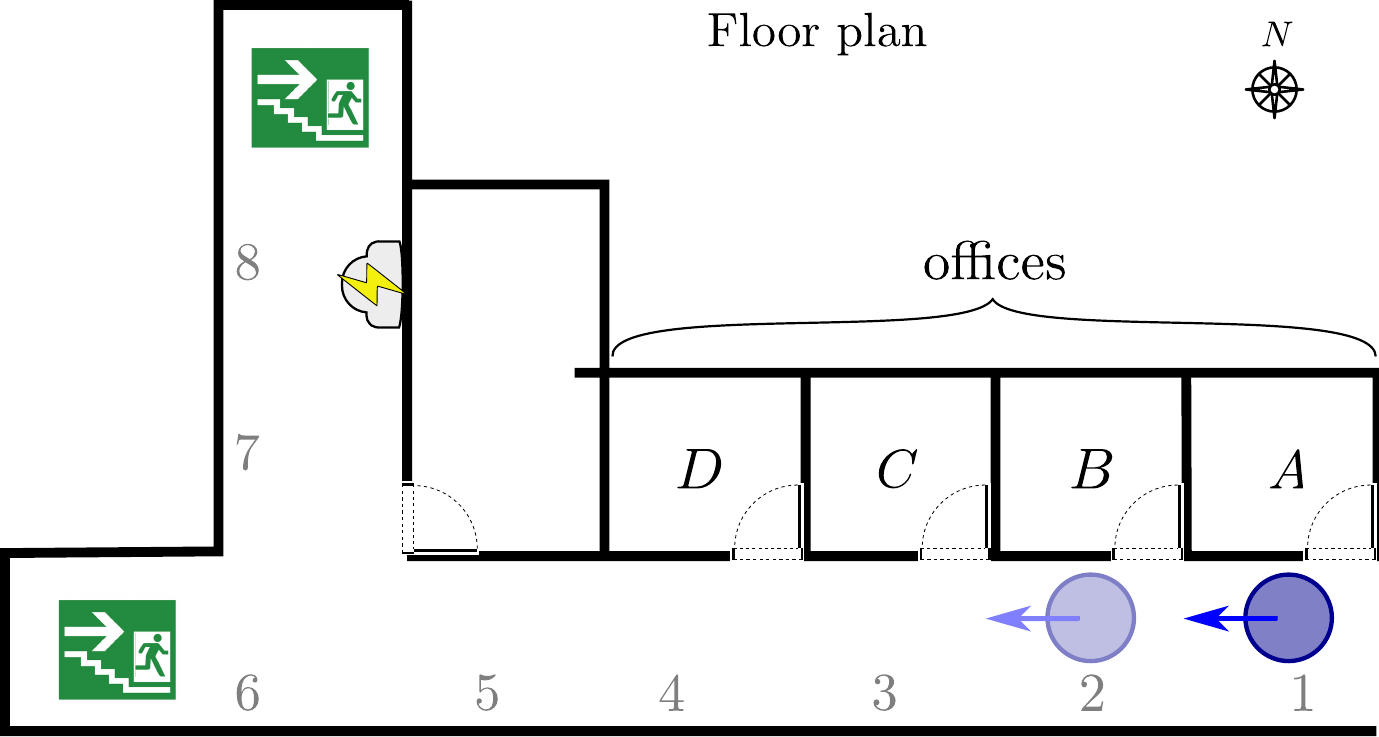}
	\includegraphics[scale=0.4]{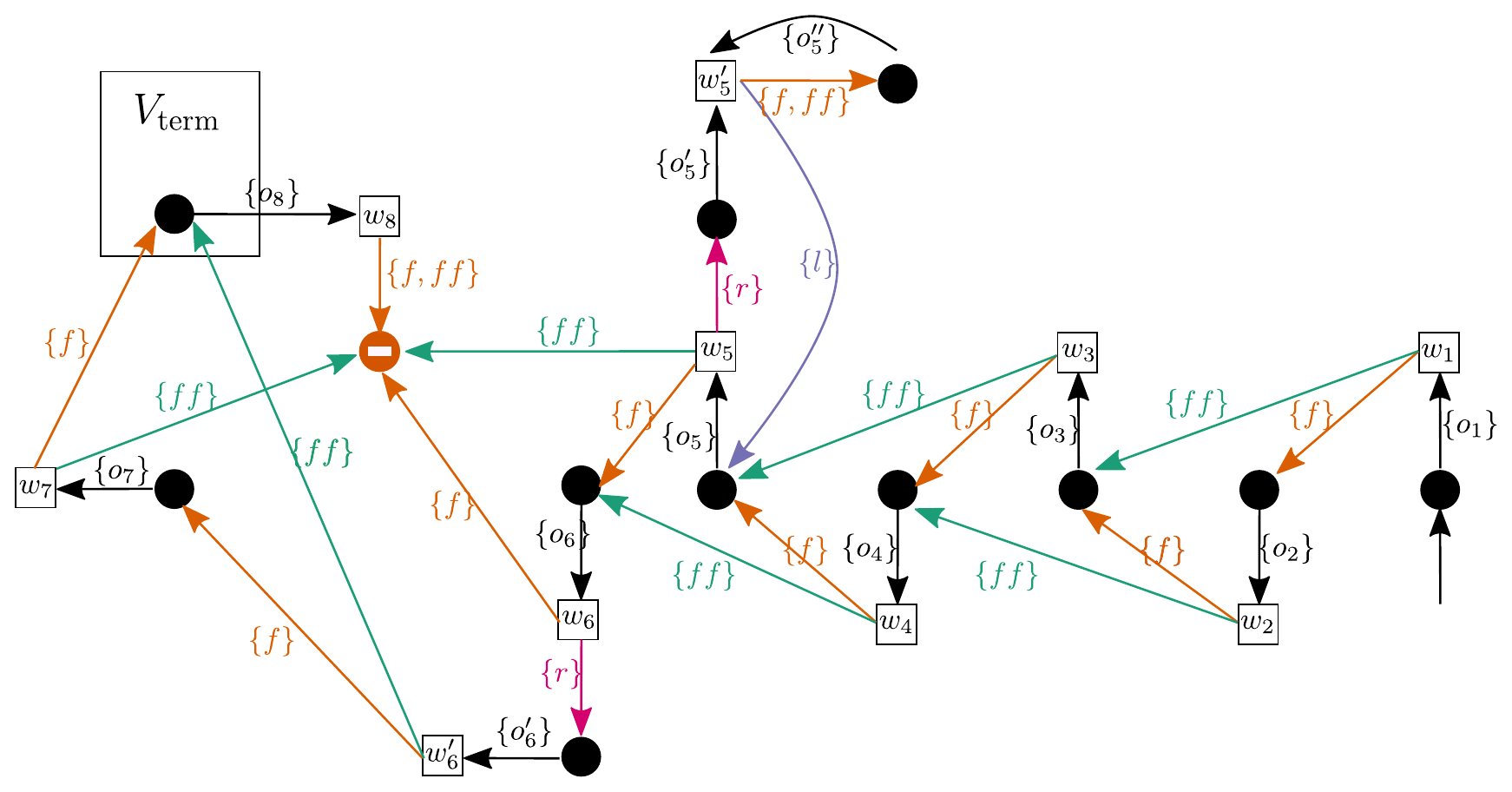}
	\caption{A robot moves toward the charging station, while avoiding
stairs. The figure below shows the p-graph of this planning problem.}
        \label{fig:scenario_1}
\end{figure}

\section{Structure and fabrication constraints for realizable sensors}

The covers found via the preceding approach might be thought of as a sort of
`free object', on which we may now impose additional constraints. Specifically
we're interested in including constraints that help model aspects pertinent to
realizable sensors.

\subsection{Sensor map properties}
We will start with the following properties:
\begin{property}[Partition]
Cover $C=\{G_1, G_2, \dots, G_n\}$ is a \defemp{partition}, if $G_i\cap G_j = \emptyset$ for any $i,j\in\{1,\dots, n\}$, $i \neq j$.
\end{property}
The label map for the camera in Fig.~\ref{fig:motivating_example} is a
partition, as it divides the space into red, gray and white locations.

\begin{comment}
% This just says the same thing twice.
\begin{definition}[Neighbor]
Given a reflexive and commutative relation $N\subseteq Y\times Y$, we say $y_1$
is a neighbor of $y_2$ if $(y_1, y_2)\in N$, and write it as $y_1N y_2$.
\end{definition}
The neighboring relation is also reflexive and commutative.
\end{comment}

The next concept of interest is a notion of contiguousness, but we need
a more basic structure first.

\begin{definition}[Neighbor]
Relation $N\subseteq Y\times Y$,
written $y_1 N y_2$, is a 
\defemp{neighbor relation} if it is 
reflexive and commutative. %We will write it as $y_1 N y_2$.
\end{definition}

\begin{property}[Contiguous]
With neighbor relation $N$, then
$\mathcal{C}$ is the largest contiguous cover if 
(1)~$\forall y\in
Y$, $\{y\}\in \mathcal{C}$; 
(2)~$\forall G_1, G_2\in \mathcal{C}$, $G_1\cup G_2\in
\mathcal{C}$ \gobble{if and only if} $\iff$ $\exists y_1\in G_1, \exists y_2\in G_2$, such that
$y_1 N y_2$. A given cover $C$ is \defemp{contiguous}, if $C\subseteq \mathcal{C}$.
\end{property}
The distance sensor in Fig.~\ref{fig:motivating_example} has a contiguous
sensor map for the obvious neighbor notion, since its noise distribution is contiguous.

\begin{property}[output]
Cover $C$ is \defemp{$k$-outputting}, if \mbox{$|C|=k$}.
\end{property}
The cardinality of the sensor keeps track of total number of output readings,
a sort of notion of dynamic range.
\begin{property}[overlap]
A cover $C=\{G_1, G_2, \dots, G_n\}$ is \defemp{$k$-overlapping}, if $\forall i,
j\in\{1,\dots, n\}$ and $i\neq j$, $|G_i\cap G_j|\leq k$.
\end{property}
%The overlapped events for different readings are indistinguishable from these two readings. 
This is a generalization of the partition property, quantifying how 
much readings bleed into one another.

\begin{property}[width]
Cover $C=\{G_1, G_2, \dots, G_n\}$ is \defemp{$k$-wide} (or, has width $k$), if $\forall 1\leq i\leq n$,
$|G_i|=k$.
\end{property}
%The width of a cover models the volume of the noise, which is the total number of events could potentially happen for a single sensor reading.

The width of a cover gives a notion of precision, a sense of the
volume of noise, describing the number of events that could 
account for a single sensor reading.

The properties above may also be combined in specifying constraints on 
sensor maps.
%Cover -> partition -> projection 
%convex covers, convex partition 
%convex partitions
%max 1 intersection convex cover (overlap)
%cardinality
%matrix on subset size

All of the properties can be used either in (1)~reducing the sets generated, or
in (2)~filtering to discard those which violate the constraints, as operators
are applied.  For instance, in the first case, if searching for partitions
only, then partitions exclusively need be computed---a process easier to write
and faster to execute than the full cover case. 

\subsection{Empirical search for sensors under fabrication constraints}

We included the properties described above in our implementation and examined
in the following scenario. A robot moves along a cyclic track toward some goal,
marked by a star. The robot can move forward or backward at different speeds
at different parts of the track, which discretizes the track into $6$ segments
$\{s_1, s_2, \dots, s_6\}$ as shown in Fig.~\ref{fig:scenario_2}.  The angular
range of segment $s_i$ is denoted as $\{o_i\}$ for $i\in\{1,\dots, 4\}$, and
$\{o_i, o\}$ for $i\in\{5,6\}$. The overlap $o$ is the common angular range for
both $s_5$ and $s_6$, arising from the kink. Now, the set of all observations
is $Y=\{o, o_1, o_2, \dots, o_6\}$, where each observation represents a range
of angles.\footnote{Previously we pointed out that $Y$ was finite; this is
still true, though the elements it contains are themselves infinite sets.} The
neighbor relationship of these observations inherits from the circular
neighbor relationship of their angles as shown in the figure.  Considering only
forward or backward actions,  the robot, initially located at $s_1$ or $s_3$,
must move to reach the goal $s_5$. To achieve this, the robot has to reduce its
uncertainty, and it does this via a VHF omnidirectional range (VOR) sensor. As
shown on the right-hand side of Fig.~\ref{fig:scenario_2}, the sensor measures
the angular information via a timer. The specification of the timer determines
the properties of the sensor map. Suppose we have a timer with no noise, then it
gives $1$-overlapping contiguous sensor maps, such as $[\{o_2, o_3, o_4, o_5\},
\{o_5,o\}, \{o, o_6, o_1\}]$.  There are only \num{2183} such sensor maps.  But with
a noisy timer, it generates contiguous sensor maps, which leads to
\num{235807} observation covers.  

\begin{figure}[ht!] 
\centering
\includegraphics[scale=2]{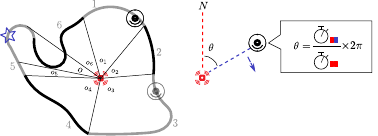} \caption{A robot with 
a sensor equipped to determine angles moves from
its initial position toward the goal along a cyclic track. The sensor is
realized by a VOR-like beacon at the center, a photo-electric sensor and a timer on
the robot. The beacon has a
unidirectional blue light rotating at a fast constant angular velocity, which
is so fast that can be neglected with respect to robot's movement. It also
emits an omnidirectional red light when the blue light points North.  The robot
can determine angular information by timing the difference between seeing
red-red and red-blue flashes. \label{fig:scenario_2}} 
\end{figure}

\label{sec:noiseless}

\noindent Consideration of the scenario above leads to the following:
\begin{proposition}
A noiseless sensor taking measurements on a continuous or non-continuous space
always gives a $1$-overlapping sensor map under discretization.
\end{proposition}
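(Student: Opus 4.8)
The plan is to translate the claim into the cover representation supplied by the first theorem and then reduce it to an elementary fact about one boundary. First I would fix a formal reading of the hypotheses. Model a \emph{noiseless} sensor as a single-valued measurement function $f\colon\Omega\to M$, where $\Omega$ is the space the sensor acts on (an interval, a circle, or a discrete/finite set) and $M$ is the measurement domain. Model a \emph{discretization} as supplying (i) a tiling of $\Omega$ by the regions $\Omega_y$ associated with the planning problem's observations $y\in Y(W)$, these tiles having pairwise disjoint interiors, and (ii) a partition of $M$ into finitely many bins $M_x$ indexed by the discrete readings $x$. The one thing noiselessness buys us is that the \emph{reading regions} $R_x \defeq f^{-1}(M_x)$ form a partition of $\Omega$: single-valuedness of $f$ together with disjointness and exhaustiveness of the $M_x$ makes the $R_x$ pairwise disjoint and covering. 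The sensor map induced by this discretization is $h(y)=\{x : \Omega_y\cap R_x\neq\emptyset\}$, so by the sensor-map/cover equivalence the observation cover $\coverintr{h}$ has as its members exactly the sets $h^{-1}(x)=\{y : \Omega_y\cap R_x\neq\emptyset\}$.

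Next I would argue that any two distinct members of $\coverintr{h}$ share at most one observation. A common element $y$ of $h^{-1}(x)$ and $h^{-1}(x')$, with $x\neq x'$, means the tile $\Omega_y$ meets two distinct reading regions $R_x$ and $R_{x'}$; since the $R_x$ partition $\Omega$, this can only happen if $\Omega_y$ contains points on both sides of the common boundary of $R_x$ and $R_{x'}$. In the one-dimensional continuous case the tiles $\Omega_y$ and the regions $R_x$ are intervals laid out in order, so such a tile must \emph{straddle} a point of that boundary; because the tiles have pairwise disjoint interiors, at most one tile straddles any given point, whence $|h^{-1}(x)\cap h^{-1}(x')|\le 1$ and $\coverintr{h}$ is $1$-overlapping --- and this holds for every discretization, which is what ``always'' asserts. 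In the non-continuous case a discretization cannot split a point, so each tile $\Omega_y$ is a singleton, lies in exactly one reading region, and $h$ is in fact a partition, a fortiori $1$-overlapping.

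The main obstacle I anticipate is boundary book-keeping rather than anything deep: with closed tiles and closed bins one can manufacture an apparent overlap of two when a tile boundary happens to coincide with a reading-region boundary (and, on a circle, when a pair of bins wraps all the way around). I would dispose of this by adopting the standard half-open convention for discretizations --- every point of $\Omega$ lies in exactly one tile and every point of $M$ in exactly one bin --- which is the natural meaning of the word, makes the $R_x$ a genuine partition with clean boundaries, and removes the degenerate coincidences; the circular case then follows from the linear one by cutting $\Omega$ at any tile boundary. A secondary point worth flagging in the write-up is that all the geometric content sits in the reduction to the one-dimensional picture: if the observation regions $\Omega_y$ were allowed to overlap properly instead of tiling, or the reading regions were higher-dimensional or disconnected, small examples with overlap two appear --- so the proof must, and does, use that a discretization produces interval-like tiles with pairwise disjoint interiors.
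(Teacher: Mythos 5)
Your proof takes essentially the same route as the paper's: noiselessness makes the reading preimages a partition of the measurement space, so a shared observation can only arise from a partition boundary falling strictly inside a discretization cell, and a boundary-counting step bounds the overlap by one. You merely make explicit what the paper leaves implicit (the interval-like, disjoint-interior structure of the tiles and the ``at most one tile straddles a given boundary'' step), so this is a more careful write-up of the same argument rather than a different one.
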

\begin{proof}
When there is no noise for the sensor, the sensor map partitions the original
continuous or non-continuous measurement space. The task may only need a coarser
discretization of the measurement space. If every boundary of the sensor map
is a discretization boundary, then the sensor map is still a partition
on the discretized space. If it is not, then there exists a sensor map boundary
that falls into one of the discretized observations. That observation is 
shared by the preimage of only the readings separated by the corresponding
sensor map boundary. Hence, the maximum overlap between subsets in the
observation cover is $1$.
\end{proof}

\section{Generalization to belief stipulations}

Some prior work has examined instances wherein a robot should be stopped from
knowing too much due to privacy considerations~\cite{OKa08,zhang18complete}.  In
these cases, one may pose constraints on robot's belief; in our prior work this
was achieved via logical expressions \cite{zhang18discreteplan}. To search for
sensor maps and plans that also satisfy these richer stipulations, the algorithm
above needs the following modifications:
\begin{tightitemize2}
\item The belief tree should only contain belief vertices satisfying the
stipulations, and the dummy vertex. We transition to the
dummy if the target belief violates the stipulations. 

\item We must expand the action vertex in the belief tree over all subsets of
actions in the plan instead of just the singleton ones, since none of the
singleton actions may transition to the subtree that satisfies the stipulations
in each belief vertex.
\end{tightitemize2}

\section{Related work}
% Another thread is about planning with goal obfuscation. They map the robot's
% state action pair (s,a) to a single observation received by the observer.
% There is a similar notion of label map, which is a partition. 

%kulkarni2018resource
%Anagha Kulkarni, Matthew Klenk, Shantanu Rane, Hamed Soroush, Resource
%Bounded Secure Goal Obfuscation, AAAI Fall Symposium on Integrating Planning,
%Diagnosis, and Causal Reasoning, 2018. 

%kulkarni2019unified
%Anagha Kulkarni, Siddharth Srivastava and Subbarao Kambhampati, A Unified
%Framework for Planning in Adversarial and Cooperative Environments, ICAPS
%workshop PlanRob, 2018.

%This paper describes ideas at the interplay of two 
%bodies research: techniques to automate design of robots, and methods
%for addressing privacy and associated security considerations.
Approaches for automated design of robots have been the subject of three recent
workshops at RSS and ICRA over the last \num{3} years~\cite{nilles18robot}. Current research examines
aspects of hardware fabrication (e.g., 3D-printing~\cite{Fuller02} and
prototyping~\cite{HoovFearing08,Fitzner17}), interconnection
optimization~\cite{ZiglarWW17}, rapid end-to-end development and
deployment~\cite{Luck-RSS-17,Schulz17}, automated synthesis (jointly for
mechanisms and controllers) from specifications of desired
capabilities~\cite{Mehta18}, and optimization subject to functionality--resource
interdependencies~\cite{censi17co,pervan2018low}.

A rich history of robotics research has examined the information required to
accomplish a particular task, including specifically what sensors ought
to provide \cite{erdmann95understanding}. Since sensors can be costly and
unreliable, important early papers explored how one might forgo them
entirely~\cite{erdmann1988exploration,Mason93kicking}; other work examined how
one might reason about sensors to establish that they do provide enough
information~\cite{donald95information,OKaLav08}. Our prior
work~\cite{zhang18complete} exploits properties of carefully
conceived sensors.  
We are interested in all possible sensors, including hypothetical ones, that
provide adequate information to solve the planning problem. 
%To reason about sensors, abstractions must be made to model the sensors or the information that sensors should provide. 
Imperfection in sensors is modeled as conflation in the perceived events. This
conflation is usually considered to be transitive in existing
work~\cite{OKaLav08,zhang18discreteplan,zhang2018does,ghasemlou2018delineating,kulkarni2019unified},
when reasoning about the information through the sensors. This
paper describes sensors via a sensor map representation which can model
non-transitive conflation, in the spirit
of~\cite{setlabelrss,saberifar18pgraph}. It contributes methods to search for
all sensors such that there exists a plan for each one to accomplish a given
task.

\section{Conclusion}
This paper explores the space of all sensors that provide enough information
to solve the planning problem for the robots. The abstraction used for sensors
is a generalization of prior models. A notion of upper cover is proposed to
compress the representation and speed the search process. Properties are
introduced to express domain-knowledge regarding
fabrication constraints for sensors.

%Abstractions and algorithms are
%proposed to model and search for all sensors. 
%The algorithms in this paper can help shed light on a task's particular
%informational demands, even if the requirements concern obscuring rather than
%exposing state and structure.  Prior work with robots that actively manipulate
%their ignorance used specific and somewhat unusual sensors.  One easily sees
%that those sensors have certain characteristics which are useful in controlling
%belief uncertainty (unbounded and contiguous preimage sets, for instance), but
%a detailed understanding of whether those attributes are necessary has been
%unclear. The approach we have developed enables the roboticist to explore such
%questions for specific problems, allowing the space of the designs and plans to
%be mapped, constraints to be visualized, and trade-offs evaluated.
%
%More work remains to be done to scale up the formulation we have
%presented to address problems of moderate and large size. We believe that
%both informed search and randomized algorithms may help in this regard.
%
%The former requires the definition of admissible heuristics over design
%costs, an intriguing question for which the monotonicity under extension
%property we have defined may play an important role.

\IEEEpeerreviewmaketitle

\newpage

\bibliographystyle{IEEEtran}
\bibliography{mybib}
\end{document}